\documentclass[draftclsnofoot, onecolumn,romanappendices, 12pt]{IEEEtran}
\def\paperversion{2} 
\IEEEoverridecommandlockouts

\usepackage{balance}
\usepackage[normalem]{ulem}

\usepackage{algorithm}
\usepackage{algorithmic}
\makeatletter
\def\endthebibliography{%
  \def\@noitemerr{\@latex@warning{Empty `thebibliography' environment}}%
  \endlist
}
\makeatother

\usepackage{cite}
\usepackage{algorithmic}
\usepackage{graphicx}
\usepackage{textcomp}
\usepackage{xcolor}

\usepackage[cmex10]{amsmath}
\usepackage{amssymb}
\usepackage{amsthm}
\usepackage{amsfonts}
\usepackage{mathtools}
\usepackage{bm}
\usepackage{url}
\usepackage{enumerate}
\usepackage{booktabs}
\usepackage{multirow}
\usepackage{colortbl}
\usepackage[font=footnotesize, style=base]{caption}
\captionsetup[figure]{font=footnotesize}
\usepackage[font=footnotesize]{subcaption}

\usepackage[T1]{fontenc}
\usepackage{float}

\usepackage{tikz}
\usepackage{pgfplots}
\pgfplotsset{my style/.append style={axis x line=middle, axis y line=
middle, xlabel={$x$}, ylabel={$y$}, axis equal }}

\newtheorem{lemma}{Lemma}
\newtheorem{theorem}{Theorem}
\newtheorem{corollary}{Corollary}
\newtheorem{proposition}{Proposition}
\newtheorem{assumption}{Assumption}
\theoremstyle{definition}\newtheorem{definition}{Definition}
\theoremstyle{remark}\newtheorem{example}{Example}
\theoremstyle{remark}\newtheorem{remark}{Remark}

\newcommand{\trans}{^{\mathrm T}}

\newcommand{\diff}{\,\mathrm{d}}
\newcommand{\Lip}{\text{Lip}}
\DeclarePairedDelimiterX{\inp}[2]{\langle}{\rangle}{#1, #2}

\title{
{Two-sample Test using Projected Wasserstein Distance}}
\author{\IEEEauthorblockN{Jie Wang, Rui Gao, and Yao Xie
\thanks{J.~Wang and Y.~Xie are with H. Milton Stewart School of Industrial and Systems Engineering, Georgia Institute of Technology.
R.~Gao is with Department of Information, Risk, and Operations Management, University of Texas at Austin.
Yao Xie is partially supported by an NSF CAREER Award CCF-1650913, DMS-1938106, DMS-1830210, CCF-1442635, and CMMI-1917624.
}
}
}

\begin{document}

\maketitle
\begin{abstract}
We develop a projected Wasserstein distance for the two-sample test, a fundamental problem in statistics and machine learning: given two sets of samples, to determine whether they are from the same distribution. In particular, we aim to circumvent the curse of dimensionality in Wasserstein distance: when the dimension is high, it has diminishing testing power, which is inherently due to the slow concentration property of Wasserstein metrics in the high dimension space. A key contribution is to couple optimal projection to find the low dimensional linear mapping to maximize the Wasserstein distance between projected probability distributions. We characterize theoretical properties of the two-sample convergence rate on IPMs and this new distance.
Numerical examples validate our theoretical results.
\end{abstract}

\section{Introduction}\label{Sec:introduction}
The problem of two-sample testing has been a fundamental topic in statistics and machine learning.
Specifically, one wishes to test whether two collections of samples $x^n:=\{x_i\}_{i=1}^n$ and $y^m:=\{y_i\}_{i=1}^m$ are from the same distribution or not.
Let $\mu$ and $\nu$ denote the underlying unknown distributions for the respective samples. 
A two-sample test is performed to decide whether to accept the null hypothesis $H_0:~\mu=\nu$ or the general alternative hypothesis $H_1:~\mu\ne\nu$.
This problem has applications in a variety of areas.
For instance, in anomaly detection~\cite{Chandola2009, bhuyan2013network, chandola2010anomaly}, the abnormal observations follow a different distribution from the typical distribution.
Similarly, in change-point detection~\cite{Xie13, Shuang15, xie2020sequential}, the post-change observations follow a different distribution from the pre-change one.
Other examples include bioinformatics~\cite{Borgwardt06}, health care~\cite{Schober19}, and statistical model criticism~\cite{Lloyd15, chwialkowski2016kernel, kim2016examples}.

Two-sample testing is a long-standing challenge in statistics.
Classical tests (see, e.g., \cite{lehmann2005testing}) mainly follow the parametric approaches, which are designed based on prior information about the distributions under each class.
Examples in classical tests include the Hotelling's two-sample test~\cite{hotelling1931} and the Student's t-test~\cite{PFANZAGL96}.
In this paper, we consider non-parametric two-sample testing, in which no prior information about the unknown distribution is available.
Two-sample tests for non-parametric settings are usually constructed based on some metrics quantifying the distance between two distributions.
Some earlier work designs two-sample tests based on the Kolmogorov-Smirnov distance~\cite{Pratt1981, Frank51}, the total variation distance~\cite{Ga1991}, and the Wasserstein distance~\cite{delbarrio1999, ramdas2015wasserstein}.
Those approaches work well for univariate two-sample tests, but they do not generalize for high-dimensional observations. 

Several data-efficient two-sample tests~\cite{Gretton12, Gretton09, Grettonnips12} are constructed based on Maximum Mean Discrepancy~(MMD), which quantifies the distance between two distributions by introducing test functions in a Reproducing Kernel Hilbert Space~(RKHS).
However, it is pointed out in \cite{reddi2014decreasing} that when the bandwidth is chosen based on the median heuristic, the MMD tests suffer from decaying power in high dimensions.
Some other probability distances such as Sinkhorn divergence~\cite{Bigot2017CentralLT}, $f$-divergence~\cite{kanamori2010fdivergence}, and classifier-based distances~\cite{kim2020classification} are considered for more efficient two-sample tests in high-dimensional settings.


In this paper, we analyze non-parametric two-sample tests based on general \emph{integral probability metrics}~(IPMs). 
Many existing tests such as \cite{Gretton12, Ga1991, ramdas2015wasserstein} are constructed based on IPMs, which can be viewed as special cases in our unified framework.
The quality of two-sample tests based on IPM depends on the choice of the function space.
On the one hand, it should be rich enough to claim $\mu=\nu$ if the metric vanishes.
On the other hand, to control the type-I error, the function space should also be relatively small so that the empirical estimate of IPM decays quickly into zero. 
The Wasserstein distance, as a particular case of IPM, is popular in many machine learning applications. However, a significant challenge in utilizing the Wasserstein distance for two-sample tests is that the empirical Wasserstein distance converges at a slow rate due to the complexity of the associated function space.
Thus, its performance suffers from the {\it curse of dimensionality}.

We summarize the contributions of this work as follows. 
\ifnum\paperversion=1
A full version of this paper is accessible at \cite{wang2021twosampleoriginal}.
\fi
\begin{itemize}
    \item 
The finite-sample convergence of general IPMs based on empirical samples is discussed based on the Rademacher complexity argument.
\item
The projected Wasserstein distance is developed to improve the convergence rate of the empirical Wasserstein distance.
Two-sample tests based on this new metric are proposed because of its satisfactory statistical property.
\item
Numerical experiments show that the two-sample test using the projected Wasserstein distance has comparable performance with existing state-of-the-art methods.
\end{itemize}


\subsection{Related Work}

Low-dimensional projections are commonly used for understanding the structure of high-dimension distributions.
Typical examples include principal component analysis~\cite{Jolliffe1986}, linear discriminant analysis~\cite{mclachlan1992discriminant}, etc.
It is intuitive to understand the differences between two collections of high-dimensional samples by projecting those samples into low-dimensional spaces in some proper directions~\cite{wei2013directionprojectionpermutation, Ghosh16, mueller2015principal, xie2020sequential, lin2020projection, lin2020projection2, huang2021riemannian}.
\cite{wei2013directionprojectionpermutation, Ghosh16} design the direction by training binary linear classifiers on samples.
\cite{mueller2015principal, xie2020sequential} find the worst-case direction that maximizes the Wasserstein distance between projected sample points in one-dimension.
Recently, \cite{lin2020projection, lin2020projection2, huang2021riemannian} naturally extend this idea by projecting data points into a $k$-dimensional linear subspace with $k>1$ such that the $2$-Wasserstein distance after projection is maximized.
Our proposed projected Wasserstein distance is similar to this framework, but we use $1$-Wasserstein distance instead.
By leveraging tools from generalization bounds in IPMs, we are able to give a tighter finite-sample convergence rate of the proposed distance.

In high-dimensional settings, the Wasserstein distance is difficult to compute, and its convergence rate is slow.
Several variants of the Wasserstein distance have been developed in the literature to address these two issues.
The smoothed Wasserstein distance is designed to reduce the computational cost~\cite{cuturi2013sinkhorn} and improve the sample complexity~\cite{genevay2019sample} by using entropic regularizations.
Some projection-based variants of the Wasserstein distance are also discussed to address the computational complexity issue, including the sliced~\cite{Bonneel14} and the max-sliced~\cite{deshp2019maxsliced} Wasserstein distances. 
Sliced Wasserstein distance is based on the average Wasserstein distance between two projected distributions along infinitely many random one-dimensional linear projection mappings.
It is shown in \cite{manole2019minimax} that its empirical estimate decays into zero with rate $O(n^{-1/2})$ under mild conditions, and a two-sample test can be constructed based on this nice statistical behavior.
However, it is costly to compute this distance because a large number of random projection mappings are required to approximate the distance within minimal precision error.
The max-sliced Wasserstein distance is proposed to address this issue by finding the worst-case one-dimensional projection mapping such that the Wasserstein distance between projected distributions is maximized.
The projected Wasserstein distance proposed in our paper generalizes the max-sliced Wasserstein distance by considering the $k$-dimensional projection mappings, and we discuss the finite-sample convergence rate of the projected Wasserstein distance so that two-sample tests can be designed.

\section{Problem Setup}\label{Sec:problem}
Let $x^n:=\{x_i\}_{i=1}^n$ and $y^m:=\{y_i\}_{i=1}^m$ be i.i.d. samples generated from distributions $\mu$ and $\nu$, respectively.
The supports of $\mu$ and $\nu$ are denoted as $\text{supp}(\mu)$ and $\text{supp}(\nu)$, respectively.
We assume that both $\mu$ and $\nu$ are unknown distributions, and the supports of them belong to the metric space $(\mathbb{R}^d, \textsf{d})$ with $\textsf{d}$ being the Euclidean metric.
A two-sample test is performed to decide whether accept $H_0:~\mu=\nu$ or $H_1:~\mu\ne\nu$ based on collected samples.
Denote by $T:~(x^n, y^m)\to\{d_0, d_1\}$ the two-sample test, where $d_i$ means that we accept the hypothesis $H_i$ and reject the other, $i=0,1$.
The type-I and type-II error probabilities for the test $T$ are defined as
\begin{align*}
\epsilon^{(\text{I})}_{n,m}&={\mathbb{P}}_{x^n\sim\mu, y^m\sim\nu}
\bigg(
T(x^n,y^m)=d_1
\bigg),\quad\text{under }H_0,\\
\epsilon^{(\text{II})}_{n,m}&={\mathbb{P}}_{x^n\sim\mu, y^m\sim\nu}
\bigg(
T(x^n,y^m)=d_0
\bigg),\quad\text{under }H_1.
\end{align*}
Denote by $\hat{\mu}_n$ and $\hat{\nu}_m$ the empirical distributions constructed from i.i.d. samples from $\mu$ and $\nu$:
\[
\hat{\mu}_n\triangleq \frac{1}{n}\sum_{i=1}^n\delta_{x_i}, \qquad
\hat{\nu}_m\triangleq \frac{1}{m}\sum_{i=1}^m\delta_{y_i}.
\]
Given collected samples $x^n$ and $y^m$, a non-parametric two-sample test is usually constructed based on IPMs, which quantify the discrepancy between the associated empirical distributions.
\begin{definition}[Integral Probability Metric]
Given two distributions ${\mu}$ and ${\nu}$, define the integral probability metric as
\[
\text{IPM}({\mu},{\nu})=\sup_{f\in\mathcal{F}}\bigg(
\int f(x)\diff{\mu}(x)
-
\int f(y)\diff{\nu}(y)
\bigg).
\]
\end{definition}

Denote by $W$ the $1$-Wasserstein distance for quantifying the distance between two probability distributions:
\[
W(\mu,\nu)
=
\begin{aligned}
\min_{\pi\in\Pi(\mu,\nu)}&~\int c(x,y)\diff \pi(x,y),
\end{aligned}
\]
where $\Pi(\mu,\nu)$ denotes the set of joint distributions whose marginal distributions are $\mu$ and $\nu$.
We focus on the case where the cost function $c(\cdot,\cdot)=\textsf{d}(\cdot,\cdot)$. 
By the Kantorovich-Rubinstein duality result~\cite[Theorem~5.9]{villani2016optimal}, the Wasserstein distance can be reformulated as a special integral probability metric:
\[
\begin{aligned}
W(\mu,\nu)&=\sup_{f\in\text{Lip}_1}\left(\int f(x)\diff\mu(x) - \int f(y)\diff\nu(y)\right),
\end{aligned}
\]
where the space $\text{Lip}_1$ denotes a collection of $1$-Lipschitz functions:
\[
\text{Lip}_1=\left\{f:~
\sup_{
\substack{
x\in\text{supp}(\mu),
y\in\text{supp}(\nu),
\\
x\ne y
}
}
\frac{|f(x) - f(y)|}{c(x,y)}
\le 1\right\}.
\]
While the Wasserstein distance has wide applications in machine learning, the finite-sample convergence rate of the Wasserstein distance between empirical distributions is slow in high-dimensional settings.
We propose the projected Wasserstein distance to address this issue.
\begin{definition}[Projected Wasserstein Distance]\label{Definition:projected:wasserstein:distance}
Given two distributions ${\mu}$ and ${\nu}$, define the projected Wasserstein distance as
\[
\mathcal{P}W(\mu, \nu)
=
\max_{\mathcal{A}:~\mathbb{R}^d\to\mathbb{R}^k}~
W\left(
\mathcal{A}\#\mu,
\mathcal{A}\#\nu
\right)\quad
\mathrm{s.t.}\quad
A\trans A=I_k,
\]
where the operator $\#$ denotes the push-forward operator, i.e.,
\[
\mathcal{A}(z)\sim \mathcal{A}\#\mu\quad\text{for }z\sim\mu,
\]
and we denote $\mathcal{A}$ as a linear operator such that $\mathcal{A}(z)=A\trans z$ with $z\in\mathbb{R}^d$ and $A\in\mathbb{R}^{d\times k}$.
\end{definition}
The orthogonal constraint on the projection mapping $A$ is for normalization, such that any two different projection mappings have distinct projection directions.
The projected Wasserstein distance can also be viewed as a special case of integral probability metric with the function space
\begin{equation}\label{Eq:projected:function:space}
\mathcal{F}=\{z\mapsto g(A\trans z):~g\in\text{Lip}_1, A\trans A=I_k\}.
\end{equation}

In this paper, we design the two-sample test so that the acceptance region for the null hypothesis $H_0$ is given by:
\begin{equation}\label{Eq:acceptance:region}
\left\{
(x^n, y^m):~
\mathcal{P}W(\hat{\mu}_n, \hat{\nu}_m)
\le\gamma_{n,m}
\right\},
\end{equation}
where the hyper-parameter $\gamma_{n,m}$ is chosen so that the type-I error $\epsilon_{n,m}^{(\text{I})}$ is approximately bounded by $\alpha$ and the type-II error $\epsilon_{n,m}^{(\text{II})}$ is as small as possible.

Our two-sample testing algorithm also gives us interpretable characterizations for understanding differences between two high-dimensional distributions, by studying the worst-case projection mappings and projected samples in low dimensions.
See Fig.~\ref{fig:sample:complexity:a}) for the optimized linear mapping so that the Wasserstein distance between the projected empirical distributions is maximized, and Fig.~\ref{fig:sample:complexity:b}) for the illustration of the kernel density estimate~(KDE) plot for two projected empirical distributions in one dimension.

\section{Two-sample Testing using IPM}\label{Sec:inference}
Let the two-sample hypothesis testing be the following:
\[
H_0:~\mu=\nu,\qquad
H_1:~\mu\ne\nu.
\]
In this section, we first discuss the finite-sample guarantee for general IPMs, then a two-sample test can be designed based on this statistical property. Finally, we design a two-sample test based on the projected Wasserstein distance. 
\ifnum\paperversion=2
Omitted proofs can be found in Appendix~\ref{Sec:appendix:proof}.
\fi
The finite-sample guarantee relies on the notion of Rademacher complexity defined as follows.
\begin{definition}[Rademacher complexity]
Given the function space $\mathcal{F}$ and a distribution ${\mu}$, define the Rademacher complexity as
\[
\mathfrak{R}_n(\mathcal{F}, {\mu})
:=
\mathbb{E}_{x_i\sim {\mu}, \sigma_i}
\left[
\sup_{f\in\mathcal{F}}
\frac{1}{n}\sum_{i=1}^n\sigma_if(x_i)
\right],
\]
where $\sigma_i$'s are i.i.d. and take values in $\{-1,1\}$ with equal probability.
\end{definition}

Throughout this section, we make two technical assumptions that are standard in the literature.
\begin{assumption}\label{Assumption:light:tail}
\begin{enumerate}
\item[\text{(I)}.]
Any function $f\in\mathcal{F}$ is $L$-Lipschitz:
\[
|f(x) - f(x')|\le L\textsf{d}(x,x').
\]
\item[\text{(II)}]
The supports of target distributions $\mu$ and $\nu$ have finite diameters, $B_{\mu}$ and $B_{\nu}$, respectively:
\begin{align*}
    \sup_{x,x'\in\text{supp}(\mu)}\textsf{d}(x,x')&\le B_{\mu},
    \sup_{y,y'\in\text{supp}(\nu)}\textsf{d}(y,y')\le B_{\nu}.
\end{align*}
\end{enumerate}
\end{assumption}

Assumption~\ref{Assumption:light:tail}(II) does not hold when distributions $\mu$ and $\nu$ have unbounded supports.
In that case, we restrict the target distribution in a bounded support such that the probability of locating in such support is relatively large.
However, the diameter cannot be chosen arbitrarily large since otherwise the sample complexity bound will become too conservative.
For instance, when the distribution $\mu$ is known to be sub-Gaussian with parameter $\sigma$, we restrict the support to be $\big(\mathbb{E}_{\mu}[X] - \sqrt{2\log(1/\eta)}\sigma, \mathbb{E}_{\mu}[X] + \sqrt{2\log(1/\eta)}\sigma\big)$, which implies the probability that the distribution locates in this interval is at least $1-\eta$.




\begin{proposition}[Finite-sample Guarantee for the rate of convergence of IPM]\label{Proposition:finite:IPM}
Assume that Assumption~\ref{Assumption:light:tail} is satisfied,
and let $\epsilon>0$.
Then with probability at least,
\begin{equation}
\label{Eq:failure}
1-2\exp\left(
-\frac{2\epsilon^2mn}{L^2(mB_{\mu}^2 + nB_{\nu}^2)}
\right),
\end{equation}
we have
\begin{equation}
\label{Eq:concentration}
\big|\text{IPM}({\mu},{\nu}) - \text{IPM}(\hat{{\mu}}_n,\hat{{\nu}}_m)\big|
<
\epsilon
+
2[
\mathfrak{R}_n(\mathcal{F}, {\mu})
+
\mathfrak{R}_m(\mathcal{F}, {\nu})
].
\end{equation}
\end{proposition}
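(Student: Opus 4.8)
The plan is to split the target quantity with the triangle inequality into a \emph{concentration} term and a \emph{bias} term. Writing $\Phi(x^n,y^m) := \text{IPM}(\hat{\mu}_n,\hat{\nu}_m) = \sup_{f\in\mathcal{F}}\big(\frac1n\sum_i f(x_i) - \frac1m\sum_j f(y_j)\big)$, viewed as a function of the $n+m$ samples, I would write
\[
\big|\text{IPM}(\mu,\nu) - \Phi\big| \le \underbrace{\big|\Phi - \mathbb{E}\Phi\big|}_{\text{concentration}} + \underbrace{\big|\mathbb{E}\Phi - \text{IPM}(\mu,\nu)\big|}_{\text{bias}},
\]
and bound the two pieces separately, the first holding with high probability and the second deterministically.

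For the concentration term I would invoke McDiarmid's bounded-differences inequality. The key input is that $\Phi$ moves little under a single-sample perturbation: since $|\sup_f a_f - \sup_f b_f|\le \sup_f|a_f-b_f|$, replacing $x_i$ by $x_i'$ (both in $\text{supp}(\mu)$) changes $\Phi$ by at most $\frac1n\sup_f|f(x_i)-f(x_i')|\le \frac{LB_\mu}{n}$, using Assumption~\ref{Assumption:light:tail}(I)--(II); likewise each $y_j$ contributes $\frac{LB_\nu}{m}$. Summing squared constants gives $\sum c^2 = \frac{L^2B_\mu^2}{n} + \frac{L^2B_\nu^2}{m} = \frac{L^2(mB_\mu^2+nB_\nu^2)}{mn}$, so the two-sided McDiarmid bound yields $\mathbb{P}(|\Phi-\mathbb{E}\Phi|\ge\epsilon)\le 2\exp(-2\epsilon^2/\sum c^2)$, which is precisely the failure probability in~\eqref{Eq:failure}. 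Hence $|\Phi-\mathbb{E}\Phi|<\epsilon$ on the stated event.

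For the bias term I would use symmetrization. The lower bound $\mathbb{E}\Phi \ge \text{IPM}(\mu,\nu)$ is immediate: for each fixed $f$, $\Phi \ge \frac1n\sum_i f(x_i) - \frac1m\sum_j f(y_j)$, whose expectation is $\int f\diff\mu - \int f\diff\nu$, and taking the supremum over $f$ gives the claim. For the upper bound, inserting the population means and splitting the supremum,
\[
\Phi \le \sup_{f}\Big(\frac1n\sum_i f(x_i) - \int f\diff\mu\Big) + \sup_{f}\Big(\int f\diff\nu - \frac1m\sum_j f(y_j)\Big) + \text{IPM}(\mu,\nu),
\]
and the standard ghost-sample/Rademacher argument bounds the expectation of each of the first two suprema by $2\mathfrak{R}_n(\mathcal{F},\mu)$ and $2\mathfrak{R}_m(\mathcal{F},\nu)$ respectively (the absolute value in the definition of $\mathfrak{R}$ only loosens the bound). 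This gives $0 \le \mathbb{E}\Phi - \text{IPM}(\mu,\nu) \le 2[\mathfrak{R}_n(\mathcal{F},\mu)+\mathfrak{R}_m(\mathcal{F},\nu)]$.

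Combining the two bounds through the triangle inequality on the high-probability event delivers~\eqref{Eq:concentration}. The symmetrization inequalities are textbook, so the only step requiring genuine care — and the one I expect to be the main obstacle — is pinning down the bounded-differences constants so that $B_\mu$, $B_\nu$ and $L$ enter the exponent in exactly the asymmetric combination $mB_\mu^2+nB_\nu^2$; this is where Assumption~\ref{Assumption:light:tail} is used and where an off-by-a-factor slip would most easily occur.
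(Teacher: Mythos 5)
Your proof is correct, and it runs on the same two engines as the paper's proof---McDiarmid's inequality with bounded-difference constants $LB_\mu/n$ (for each $x_i$) and $LB_\nu/m$ (for each $y_j$), whose squared sum is exactly $L^2(mB_\mu^2+nB_\nu^2)/(mn)$, and a ghost-sample symmetrization producing $2[\mathfrak{R}_n(\mathcal{F},\mu)+\mathfrak{R}_m(\mathcal{F},\nu)]$---but you apply them to a different quantity, so the decomposition is genuinely different. The paper first dominates the error by the single random variable $\Delta:=\sup_{f\in\mathcal{F}}\big|\int f\diff\mu-\int f\diff\nu-\frac1n\sum_i f(x_i)+\frac1m\sum_j f(y_j)\big|$, using $|\sup_f a_f-\sup_f b_f|\le\sup_f|a_f-b_f|$, then applies McDiarmid to $\Delta$ and bounds $\mathbb{E}\Delta$ by symmetrization, so the Rademacher term enters as the mean of the dominating statistic. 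You instead concentrate the test statistic $\Phi=\text{IPM}(\hat{\mu}_n,\hat{\nu}_m)$ itself around its own mean and separately control the bias $|\mathbb{E}\Phi-\text{IPM}(\mu,\nu)|$, via the three-way split of the supremum together with one-sided symmetrization of each one-sample deviation (your parenthetical remark that the absolute value in the definition of $\mathfrak{R}_n$ only loosens the bound is the right justification). Both routes give identical constants and essentially identical length. Yours buys two small extras: it concentrates the statistic the test actually thresholds, and it records the structural fact $\mathbb{E}\Phi\ge\text{IPM}(\mu,\nu)$, i.e., the empirical IPM is upward biased. The paper's version is marginally more economical in that a single symmetrization of the joint two-sample deviation replaces your three-way decomposition, and the auxiliary quantity $\Delta$ absorbs both samples at once.
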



Proposition~\ref{Proposition:finite:IPM} establishes the finite-sample guarantee for the convergence of empirical two-sample IPMs. 
We can build two-sample tests by leveraging this theoretical result.
Specifically, the failure probability in \eqref{Eq:failure} controls the level of type-I error,
and the right-hand side in the inequality \eqref{Eq:concentration} controls the type-II error.

The proof of Proposition~\ref{Proposition:finite:IPM} essentially follows the one-sample generalization bound mentioned in \cite[Theorem~3.1]{zhang2018discriminationgeneralization}.
However, by following the similar proof procedure discussed in \cite{Gretton12}, we can improve this two-sample finite-sample convergence result when extra assumptions hold, but existing works about IPMs haven't investigated it yet. 

\begin{proposition}[Improved rate of convergence of IPM]
\label{Proposition:special:P1}
Under the setting in Proposition~\ref{Proposition:finite:IPM}, assume further that the sample size $n=m$ and distributions $\mu=\nu$, then with probability at least 
\[
1-2\exp\left(
-\frac{\epsilon^2n}{L^2B_{\mu}^2}
\right),
\]
we have
\[
\big|\text{IPM}({\mu},{\nu}) - \text{IPM}(\hat{{\mu}}_n,\hat{{\nu}}_m)\big|
<
\epsilon
+
2
\mathfrak{R}_n(\mathcal{F}, {\mu}).
\]
\end{proposition}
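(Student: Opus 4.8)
The plan is to re-run the two ingredients behind Proposition~\ref{Proposition:finite:IPM} — a bounded-differences concentration for the empirical IPM around its mean, and a symmetrization bound on the bias $\mathbb{E}[\text{IPM}(\hat{\mu}_n,\hat{\nu}_m)] - \text{IPM}(\mu,\nu)$ — and then spend the two extra hypotheses $n=m$ and $\mu=\nu$ to sharpen each piece. Write $g(x^n,y^m) = \text{IPM}(\hat{\mu}_n,\hat{\nu}_m) = \sup_{f\in\mathcal{F}}\big(\tfrac1n\sum_i f(x_i) - \tfrac1m\sum_j f(y_j)\big)$.

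First I would record the McDiarmid step exactly as in Proposition~\ref{Proposition:finite:IPM}: by Assumption~\ref{Assumption:light:tail}, replacing one $x_i$ changes $g$ by at most $LB_\mu/n$ and replacing one $y_j$ by at most $LB_\nu/m$. Substituting $n=m$ together with $B_\mu=B_\nu$ (the latter because $\mu=\nu$ forces equal support diameters) into the bounded-differences inequality collapses the exponent $\frac{2\epsilon^2 mn}{L^2(mB_\mu^2+nB_\nu^2)}$ of \eqref{Eq:failure} to $\frac{\epsilon^2 n}{L^2 B_\mu^2}$, which is exactly the claimed failure probability. Thus $\big|g - \mathbb{E}[g]\big| < \epsilon$ holds with the stated probability, and no new work is needed here.

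The crux is to improve the symmetrization constant from $2[\mathfrak{R}_n(\mathcal{F},\mu)+\mathfrak{R}_m(\mathcal{F},\nu)] = 4\mathfrak{R}_n(\mathcal{F},\mu)$ down to $2\mathfrak{R}_n(\mathcal{F},\mu)$. Because $\mu=\nu$ we have $\text{IPM}(\mu,\nu)=0$ and, by Jensen, $\mathbb{E}[g]\ge \text{IPM}(\mu,\nu)=0$, so only an upper bound on $\mathbb{E}[g]$ is required. Here I would use $n=m$ to pair the samples and write $\mathbb{E}[g] = \mathbb{E}\sup_{f}\tfrac1n\sum_i \big(f(x_i)-f(y_i)\big)$. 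Since $x_i$ and $y_i$ are i.i.d.\ from the common law $\mu$, each summand $f(x_i)-f(y_i)$ is a symmetric random variable, so multiplying it by a Rademacher sign $\sigma_i$ leaves the joint law unchanged; hence $\mathbb{E}[g] = \mathbb{E}\sup_f \tfrac1n\sum_i \sigma_i\big(f(x_i)-f(y_i)\big)$. Splitting the supremum across the two blocks and using that $-\sigma_i$ is again Rademacher gives $\mathbb{E}[g]\le \mathfrak{R}_n(\mathcal{F},\mu)+\mathfrak{R}_n(\mathcal{F},\nu) = 2\mathfrak{R}_n(\mathcal{F},\mu)$. Combining with the concentration step via the triangle inequality yields $\big|\text{IPM}(\mu,\nu)-g\big| = |g| < \epsilon + 2\mathfrak{R}_n(\mathcal{F},\mu)$ on the good event, which is the assertion.

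The main obstacle is precisely this pairing/symmetrization argument, since it is where both hypotheses are consumed. The equal sample sizes $n=m$ are what let us form the coupled differences $f(x_i)-f(y_i)$ with a single Rademacher variable per pair, rather than symmetrizing the two empirical averages independently (the latter being what produces the factor $4$ in the general case), and the common distribution $\mu=\nu$ is what makes each difference symmetric so that inserting $\sigma_i$ is distribution-preserving. I would double-check that the absolute value in the definition of $\mathfrak{R}_n(\mathcal{F},\mu)$ cleanly absorbs the sign flip $-\sigma_i$ on the $y$-block, and that $\mathbb{E}[g]\ge 0$ indeed holds so that the one-sided control of $\mathbb{E}[g]$ suffices to bound the two-sided quantity $\big|\text{IPM}(\mu,\nu)-g\big|$.
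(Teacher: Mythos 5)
Your proposal is correct and follows essentially the same route as the paper's proof: a McDiarmid/bounded-differences step with constants sharpened by $n=m$ and $B_\mu=B_\nu$, followed by the paired symmetrization $\frac{1}{n}\sum_i \sigma_i\bigl(f(x_i)-f(y_i)\bigr)$ (valid because the pairs $(x_i,y_i)$ are exchangeable under $\mu=\nu$), which yields $2\mathfrak{R}_n(\mathcal{F},\mu)$ rather than the factor $4$ of the general case. The only cosmetic difference is that you work with the one-sided supremum $g$ and then pass to the two-sided bound via $\mathbb{E}[g]\ge 0$, whereas the paper carries the absolute-value supremum $\Delta$ throughout; both are sound.
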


\begin{corollary}\label{Corollary:IPM:accept}
Under the setting in Proposition~\ref{Proposition:finite:IPM}, assume that $\text{IPM}({\mu},{\nu})=0$ if and only if ${\mu}={\nu}$.
\begin{enumerate}
\item
A hypothesis test of level $\alpha$ for null hypothesis ${\mu}={\nu}$,
has the acceptance region
\begin{align*}
&\text{IPM}(\hat{{\mu}}_n,\hat{{\nu}}_m)
<\sqrt{\frac{L^2B_\mu^2(m+n)}{2mn}\log\frac{2}{\alpha}}
\\
&\qquad+
2[
\mathfrak{R}_n(\mathcal{F}, {\mu})
+
\mathfrak{R}_m(\mathcal{F}, {\mu})
].
\end{align*}
\item
Assume further that $m=n$.
Then a hypothesis test of level $\alpha$ for null hypothesis ${\mu}={\nu}$,
has the acceptance region
\begin{align*}
&\text{IPM}(\hat{{\mu}}_n,\hat{{\nu}}_m)
<\sqrt{\frac{L^2B_\mu^2}{n}\log\frac{2}{\alpha}}+2\mathfrak{R}_n(\mathcal{F}, {\mu}).
\end{align*}
\end{enumerate}
\end{corollary}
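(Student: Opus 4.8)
The plan is to obtain both tests by inverting the exponential tail bounds already proved in Proposition~\ref{Proposition:finite:IPM} and Proposition~\ref{Proposition:special:P1}, specialized to the null hypothesis. The conceptual point is that a test of level $\alpha$ is one whose type-I error is at most $\alpha$, i.e. under $H_0:~\mu=\nu$ the probability of landing outside the acceptance region is bounded by $\alpha$. Under $H_0$ three simplifications occur simultaneously and must be tracked: by the hypothesis $\text{IPM}(\mu,\nu)=0\iff\mu=\nu$ the population metric vanishes, $\text{IPM}(\mu,\nu)=0$; since $\mu$ and $\nu$ share the same support their diameters coincide, $B_\nu=B_\mu$; and the two Rademacher complexities agree, $\mathfrak{R}_m(\mathcal{F},\nu)=\mathfrak{R}_m(\mathcal{F},\mu)$. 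The entire argument is then a matter of choosing $\epsilon$ so that the guaranteed failure probability equals $\alpha$.

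For part~1, I would substitute $\text{IPM}(\mu,\nu)=0$ and $B_\nu=B_\mu$ into Proposition~\ref{Proposition:finite:IPM}. The concentration statement becomes: with probability at least $1-2\exp\!\big(-2\epsilon^2 mn/(L^2(m+n)B_\mu^2)\big)$ one has $\text{IPM}(\hat{\mu}_n,\hat{\nu}_m)<\epsilon+2[\mathfrak{R}_n(\mathcal{F},\mu)+\mathfrak{R}_m(\mathcal{F},\mu)]$. Setting the failure probability equal to $\alpha$ and solving,
\[
2\exp\!\left(-\frac{2\epsilon^2 mn}{L^2(m+n)B_\mu^2}\right)=\alpha
\quad\Longrightarrow\quad
\epsilon=\sqrt{\frac{L^2B_\mu^2(m+n)}{2mn}\log\frac{2}{\alpha}},
\]
which is exactly the first term of the stated threshold. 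Hence the empirical IPM lies in the acceptance region with probability at least $1-\alpha$, so rejecting $H_0$ has probability at most $\alpha$, i.e. the test has level $\alpha$.

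Part~2 is identical in spirit but calls on the sharper Proposition~\ref{Proposition:special:P1}, which applies precisely in the regime $m=n$, $\mu=\nu$ of the null. Its failure probability is $2\exp\!\big(-\epsilon^2 n/(L^2B_\mu^2)\big)$; equating this to $\alpha$ gives $\epsilon=\sqrt{L^2B_\mu^2\log(2/\alpha)/n}$, and the single term $2\mathfrak{R}_n(\mathcal{F},\mu)$ replaces the sum of two, reproducing the claimed region.

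There is no genuine obstacle: the proof is an inversion of an exponential tail bound. The only place requiring care is the null-hypothesis bookkeeping described above—recognizing that $\mu=\nu$ at once annihilates the population IPM, collapses $B_\nu$ to $B_\mu$, and identifies the two Rademacher terms—so that the general bound reduces to the clean stated form. As a consistency check, setting $m=n$ in the part~1 threshold recovers the $\epsilon$ of part~2, confirming that the two statements agree on the noise level and differ only through the tighter Rademacher contribution afforded by Proposition~\ref{Proposition:special:P1}.
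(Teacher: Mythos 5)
Your proposal is correct and is exactly the argument the paper intends: the corollary has no separate proof in the paper because it follows by specializing Proposition~\ref{Proposition:finite:IPM} (resp.\ Proposition~\ref{Proposition:special:P1}) to the null $\mu=\nu$ and inverting the exponential tail bound, precisely the substitutions $\text{IPM}(\mu,\nu)=0$, $B_\nu=B_\mu$, $\mathfrak{R}_m(\mathcal{F},\nu)=\mathfrak{R}_m(\mathcal{F},\mu)$ and the choice of $\epsilon$ that you carry out. Your consistency check that the $m=n$ case of part~1 matches the noise level of part~2 is a nice addition but not needed.
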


To obtain reliable performance on two-sample testing, the target function space should balance the following trade-off:
\begin{enumerate}
\item 
The function space $\mathcal{F}$ should be rich enough such that $\text{IPM}(\cdot,\cdot)$ is a well-defined distance;
\item
According to Corollary~\ref{Corollary:IPM:accept}, the Rademacher complexity of the function space $\mathcal{F}$ should be relatively small so that the empirical IPM decays at a fast rate when the sample size increases.
\end{enumerate}

\begin{example}[Wasserstein Distance for Two-sample Tests]\label{Wasserstein:two:sample}
The $1$-Wasserstein distance can be viewed as a special IPM with $\mathcal{F}=\text{Lip}_1$, where the Rademacher complexity of $\mathcal{F}$ is given by \cite[Example~4]{Luxburg04}:
\begin{align*}
&\mathfrak{R}_n(\mathcal{F}, {\mu})
=\left\{
\begin{aligned}
\mathcal{O}\left(\sqrt{\log(n)}\cdot n^{-1/2}\right), &\quad\text{if $d\le 2$},\\
\mathcal{O}\left(n^{-1/d}\right), &\quad\text{if $d\ge 3$}.
\end{aligned}
\right.
\end{align*}
As a result, the sample complexity for estimating the Wasserstein distance $W(\mu,\nu)$ up to $\epsilon$ sub-optimality gap is of order $\tilde{\mathcal{O}}(\epsilon^{d\lor 2})$.
This suggests that using the Wasserstein distance for two-sample tests suffers from the curse-of-dimensionality.
\end{example}

Motivated by Example~\ref{Wasserstein:two:sample}, we propose the projected Wasserstein distance in Definition~\ref{Definition:projected:wasserstein:distance} to improve the sample complexity. 
This distance can be viewed as a special IPM with the function space defined in \eqref{Eq:projected:function:space}, a collection of $1$-Lipschitz functions in composition with an orthogonal $k$-dimensional linear mapping.
We first investigate its finite-sample guarantee by upper bounding the Rademacher complexity for $\mathcal{F}$ defined in \eqref{Eq:projected:function:space}.

\begin{proposition}
\label{Proposition:Rademacher}
Let $\mathcal{F}$ be the function space defined in \eqref{Eq:projected:function:space}.
Then
\[
\mathfrak{R}_n(\mathcal{F}, {\mu}) \le \sqrt{\frac{2k}{n} \mathbb{E}_{{\mu}}[\|X\|^2]} + \mathcal{J}_n,
\quad\text{where }
\mathcal{J}_n \triangleq \inf_{\epsilon>0}
\left\{
2\epsilon + \sqrt{\frac{36}{n}(B_{\mu}^2+4D_{\mu}^2)\log\mathcal{N}}
\right\},
\]
with $D_{\mu}=\max_{x}\|x\|$ and $\mathcal{N}$ denoting the covering number of the unit ball of $1$-Lipschitz functions:
\[
\mathcal{N}=\left( 
2\left\lceil 
\frac{2B_{\mu}}{\epsilon}
\right\rceil+1
\right)\cdot 2^{(1+2B_{\mu}/\epsilon)^k}.
\]
\end{proposition}


We can develop a two-sample test using the projected Wasserstein distance based on Corollary~\ref{Corollary:IPM:accept} and Proposition~\ref{Proposition:Rademacher}.
Except for the second-order moment term, the acceptance region does not depend on the dimension of the support of distributions, but only on the sample size and the dimension of projected spaces.
\begin{figure*}[t]
\centering
\subcaptionbox{\label{fig:testing:AUC:a}}{\includegraphics[width=0.4\textwidth]{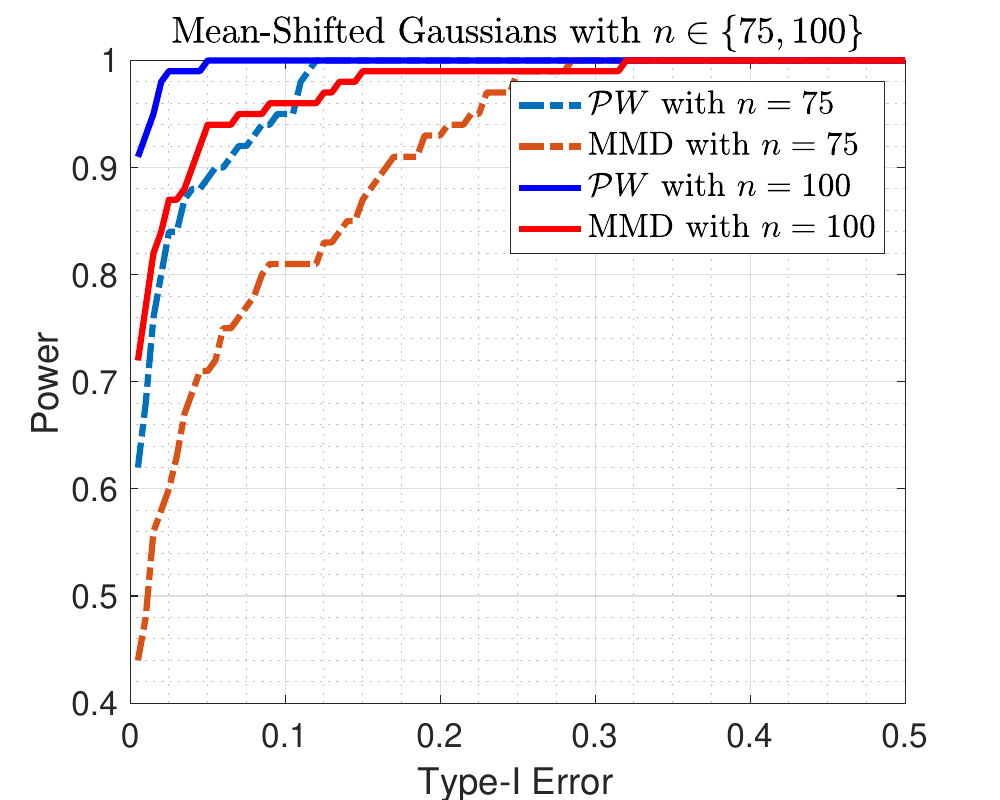}}\hfill
\subcaptionbox{\label{fig:testing:AUC:b}}{\includegraphics[width=0.4\textwidth]{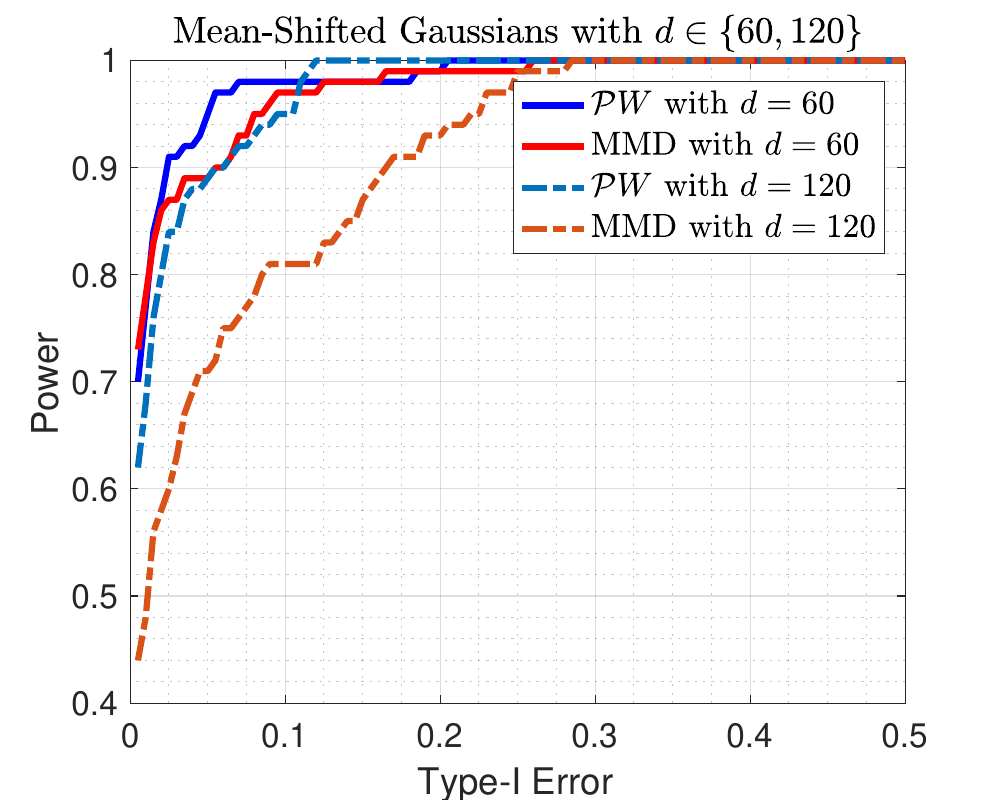}}\hfill
\subcaptionbox{\label{fig:testing:AUC:c}}{\includegraphics[width=0.4\textwidth]{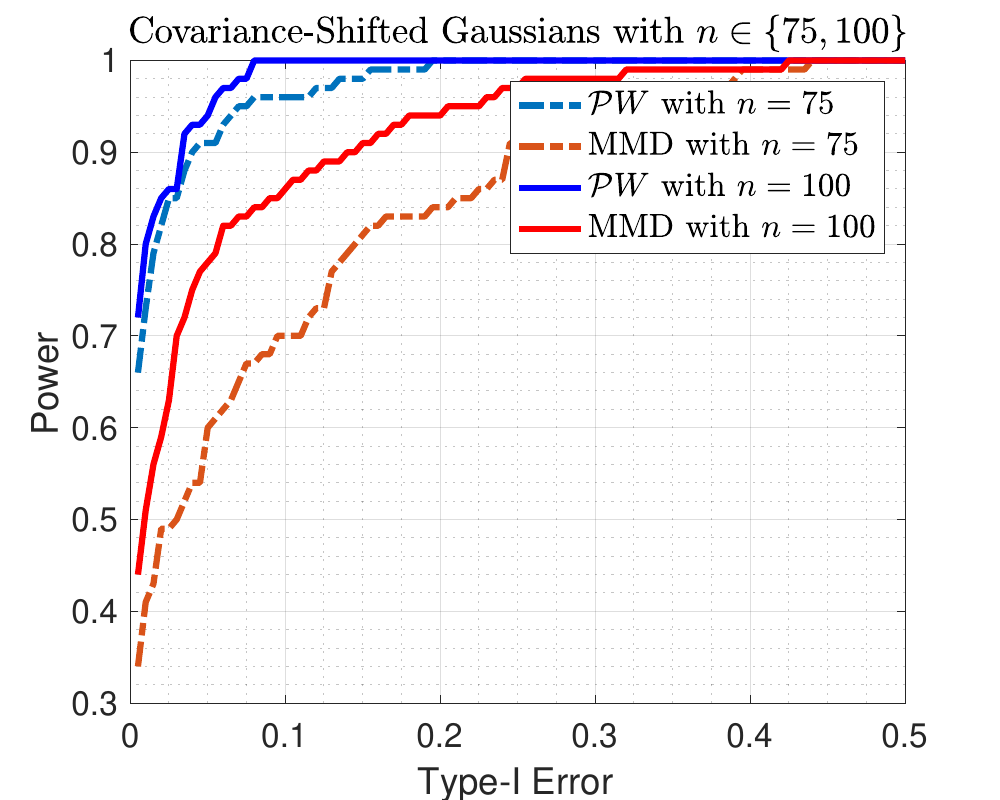}}\hfill
\subcaptionbox{\label{fig:testing:AUC:d}}{\includegraphics[width=0.4\textwidth]{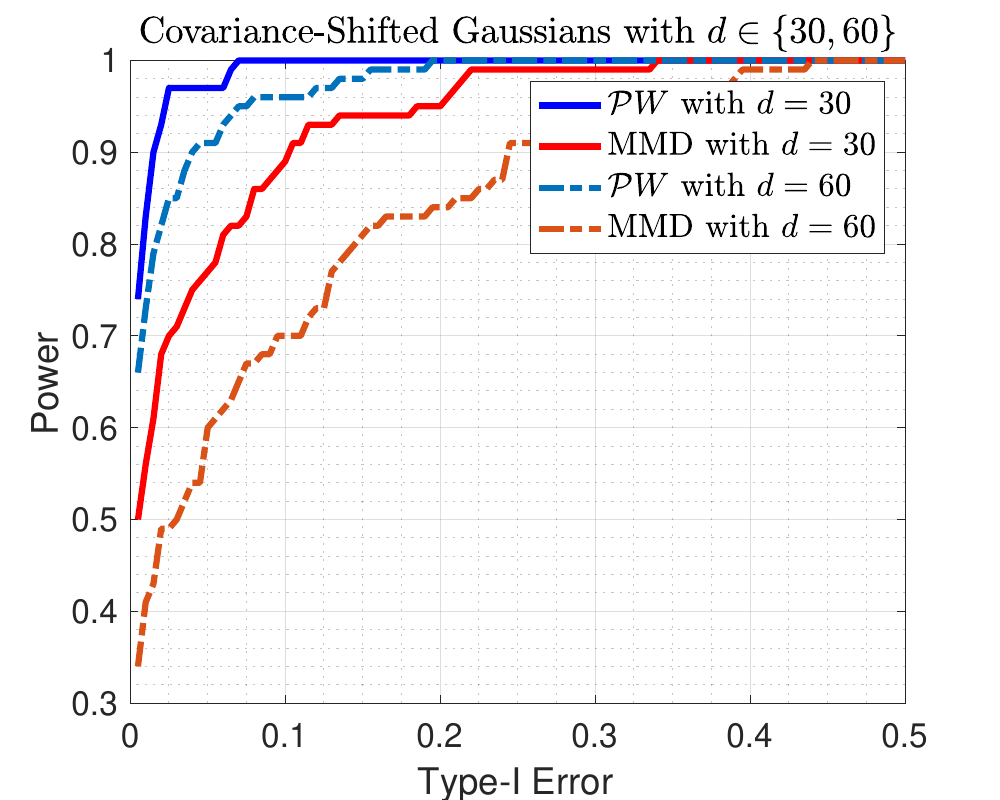}}
\caption{
Comparison of ROC curves of the PW test versus the MMD test on two types of synthesis data.
Fig.~\ref{fig:testing:AUC:a}): Mean-shifted Gaussians with $n\in\{75,100\}$ and fixed $d=120$.
Fig.~\ref{fig:testing:AUC:b}): Mean-shifted Gaussians with $d\in\{60,120\}$ and fixed $n=75$.
Fig.~\ref{fig:testing:AUC:c}): Covariance-shifted Gaussians with $n\in\{75,100\}$ and fixed $d=60$.
Fig.~\ref{fig:testing:AUC:d}): Covariance-shifted Gaussians with $d\in\{30,60\}$ and fixed $n=75$.
}
\label{fig:testing:AUC}
\end{figure*}

\begin{theorem}\label{Theorem:Sliced:accept}
Assume that Assumption~\ref{Assumption:light:tail}(II) is satisfied.
The following properties for two-sample tests using projected Wasserstein distance hold:
\begin{enumerate}
\item
A hypothesis test of level $\alpha$ for null hypothesis ${\mu}={\nu}$, has the acceptance region
\begin{align*}
&\mathcal{P}W(\hat{\mu}_n, \hat{\nu}_n)
\le 
\sqrt{\frac{B_\mu^2(m+n)}{2mn}\log\frac{2}{\alpha}}
+
2\left[
\sqrt{\frac{2k}{n}\mathbb{E}_{{\mu}}[\|X\|^2]}
+
\sqrt{\frac{2k}{m}\mathbb{E}_{{\mu}}[\|X\|^2]}
+
\mathcal{J}_n+\mathcal{J}_m
\right].
\end{align*}
\item
Assume further that $m=n$. 
Then a hypothesis test of level $\alpha$ for null hypothesis ${\mu}={\nu}$, has the acceptance region
\begin{align*}
&\mathcal{P}W(\hat{\mu}_n, \hat{\nu}_n)
\le 
\frac{B_{\mu}\sqrt{\log\frac{2}{\alpha}}}{\sqrt{n}}
+2\sqrt{\frac{2k}{n}\mathbb{E}_{{\mu}}[\|X\|^2]}+2\mathcal{J}_n.
\end{align*}
\item
Provided that $\mathcal{P}W(\mu,\nu)>\gamma_{m,n}$, then the type-II error for two-sample tests with acceptance region defined in \eqref{Eq:acceptance:region} has the upper bound:
\[
\text{Pr}_{\mathcal{H}_1}\bigg(
\mathcal{P}W(\hat{\mu}_n, \hat{\nu}_m)
<\gamma_{m,n}
\bigg)
\le \frac{\text{MSE}(\mu,\nu)}{\left(
\mathcal{P}W(\mu,\nu) - \gamma_{m,n}
\right)^2},
\]
where $\text{MSE}(\mu,\nu):=\mathbb{E}\left[\big(
\mathcal{P}W(\hat{\mu}_n, \hat{\nu}_m)
-
\mathcal{P}W({\mu}, {\nu})
\big)^2\right].$
\end{enumerate}
\end{theorem}
Since the second-order moment terms in the threshold are unknown, we can replace it with the unbiased estimate based on collected samples when performing the two-sample test.
The term $\mathcal{J}_n$ dominates the threshold since it scales with order $\tilde{O}(n^{-1/(k\lor 2)})$ due to the argument based on the Rademacher complexity. 
In order to maintain satisfactory performance on two-sample tests, the parameter $k$ should be chosen to be relatively small in practice.

\begin{remark}
Existing works including \cite{mueller2015principal, lin2020projection} also talk about the sample complexity bounds for the projected Wasserstein distance.
However, the bound presented in \cite{mueller2015principal} depends on the input dimension $d$ and focuses on the case $k=1$ only.
\cite{lin2020projection} slightly improves Assumption~\ref{Assumption:light:tail}(II) into light tail conditions, but constants presented in the sample complexity bound are not characterized explicitly,
which makes it impractical for two-sample tests.

\end{remark}

\section{Numerical Experiment}\label{Sec:numerical}
\begin{figure}
\subcaptionbox{\label{fig:sample:complexity:a}}{\includegraphics[width=0.4\textwidth]{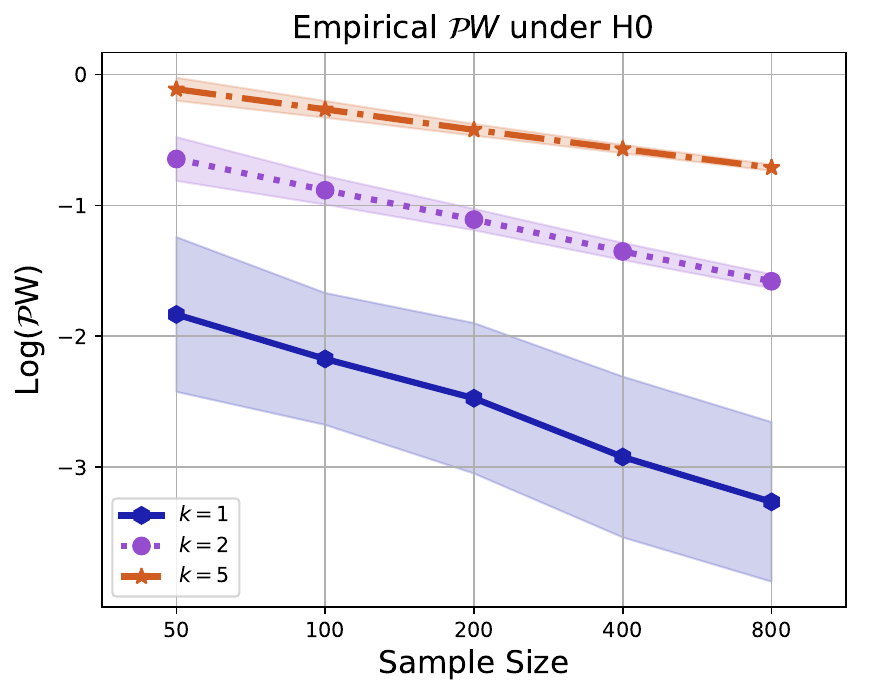}}\hfill
\subcaptionbox{\label{fig:sample:complexity:b}}{\includegraphics[width=0.4\textwidth]{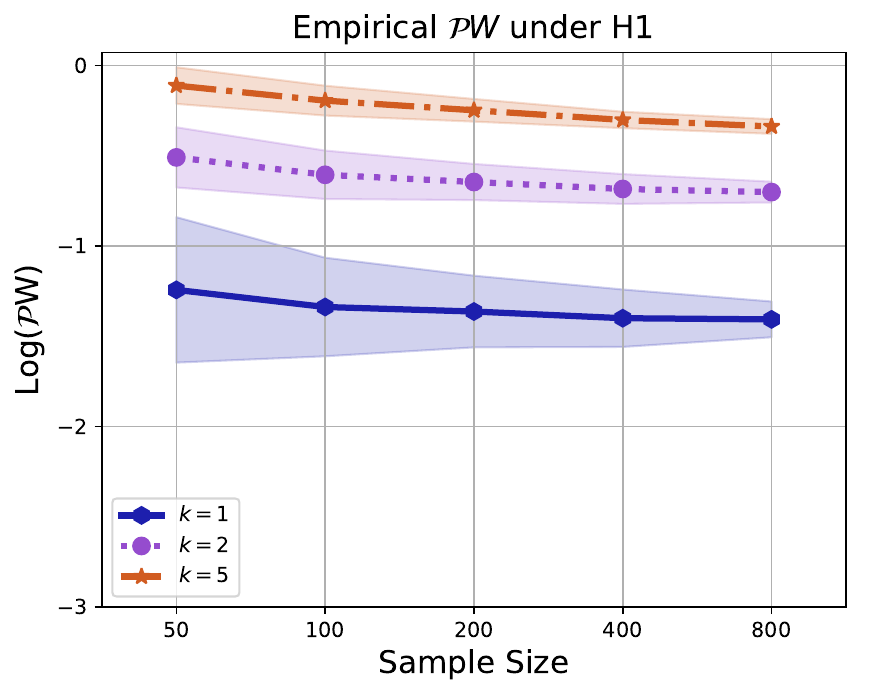}}
\caption{Mean values and $95\%$-confidence intervals for $\mathcal{P}W(\hat{\mu}_n, \hat{\nu}_n)$ across different numbers of samples $n$. Results are averaged over $100$ independent trials. 
Fig.~\ref{fig:sample:complexity:a}) corresponds to $H_0$ and Fig.~\ref{fig:sample:complexity:b}) corresponds to $H_1$.
}\label{fig:sample:complexity}
\end{figure}

\begin{figure}
\subcaptionbox{\label{fig:visualization:a}}{\includegraphics[width=0.4\textwidth]{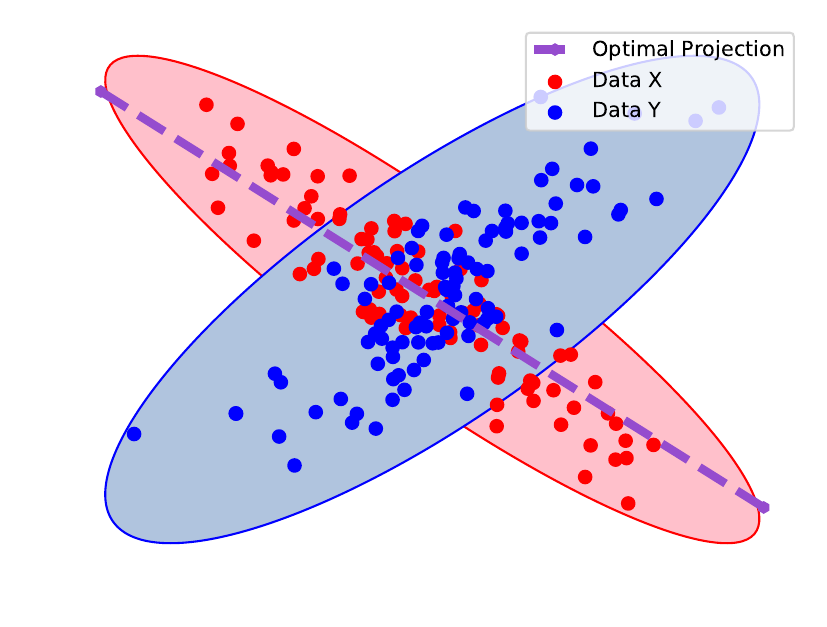}}\hfill
\subcaptionbox{\label{fig:visualization:b}}{\includegraphics[width=0.4\textwidth]{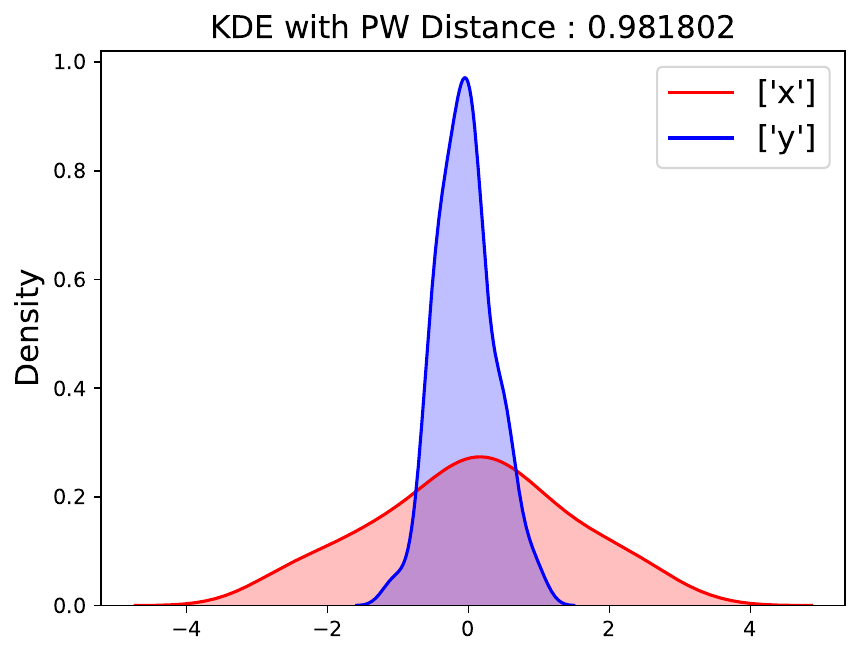}}
\caption{
\ref{fig:visualization:a}) Illustration of the projection mapping trained on two collections of samples generated from two different target distributions with $m=n=100$.
Here the red and blue points are generated from Gaussian distributions with two different covariance matrix. 
The purple arrow denotes the optimized projection mapping.
\ref{fig:visualization:b}) KDE plot for the empirical distributions after projection.
}
\label{fig:visualization}
\end{figure}

In this section, we present several numerical examples to validate our theory and demonstrate the performance of the proposed two-sample tests.
The computation of projected Wasserstein distance was recently studied in \cite{Subspacepaty, lin2020projection, huang2021riemannian}.
\ifnum\paperversion=2
We use the Riemannian gradient method discussed in \cite[Algorithm~3]{lin2020projection} to compute the projected Wasserstein distance, where the details of the corresponding algorithm are summarized in Appendix~\ref{Appendix:sec:computation}.
\fi
\ifnum\paperversion=1
We use the Riemannian gradient method discussed in \cite[Algorithm~3]{lin2020projection} to compute the projected Wasserstein distance.
\fi

However, the two-sample tests based on concentration inequalities in Section~\ref{Sec:inference} give conservative results in practice. We examine the two-sample tests using the projected Wasserstein distance via the permutation approach.
Specifically, we permute the collected data points for $N_p=100$ times, and the $p$-value of the proposed test can be computed as the fraction of times that the projected Wasserstein distances under permuted samples are greater than the projected Wasserstein distance under the original empirical samples.


\subsection{Sample Complexity of $\mathcal{P}W$}
First, we test the numerical convergence of empirical projected Wasserstein distance $\mathcal{P}W(\hat{\mu}_n,\hat{\nu}_n)$ under $H_0$ and $H_1$ across different sample sizes $n$ and dimensions of projected space $k$.
We take $d=30$ in this experiment.
When under $H_0$, we set target distributions $\mu$ and $\nu$ to be the uniform distribution on $[-1,1]^d$.
When under $H_1$, we set the distribution $\mu$ to be the uniform distribution on $[-1,1]^d$, and $\nu$ to be the Gaussian distribution $\mathcal{N}(0,\sigma^2I_d)$ truncated on the interval $[-1,1]^d$ with $\sigma=\frac{1}{1.96}$.
Fig.~\ref{fig:sample:complexity} presents the behavior of $\mathcal{P}W(\hat{\mu}_n,\hat{\nu}_n)$ under $H_0$ and $H_1$, respectively, where for fixed $n$ and $k$ the results are averaged for $100$ independent trials.
From the plot we can see that when under $H_0$, the empirical estimators decay quickly to zero, but the convergence rate slows down when the dimension of projected space $k$ increases.
Moreover, the estimators are bounded away from zero under $H_1$.
These observations justify the finite-sample guarantee as discussed before.
\subsection{Visualization for Two-sample Testing}\label{sec:visual}

Our two-sample testing framework provides a visual interpretation for classifying the differences between two collected samples.
Fig.~\ref{fig:visualization}a) present the scatter plots of two collections of data points together with the optimal projection mapping.
Specifically, the red points in Fig.~\ref{fig:visualization}a) are sampled i.i.d. from $\mathcal{N}(0,\Sigma_X)$ and blue points are sampled from $\mathcal{N}(0,\Sigma_Y)$, where
\[
\Sigma_X=
\begin{pmatrix}
1&-0.9\\-0.9&1
\end{pmatrix},\quad \Sigma_Y=\begin{pmatrix}
1&0.8\\0.8&1
\end{pmatrix}.
\]
The sample size is $m=n=100$. 
Fig.~\ref{fig:visualization}b) demonstrates the KDE plot of empirical distributions after optimal projection, from which we can identify the differences between the high-dimensional distributions well.


\subsection{Two-sample Testing on Synthesis Data}\label{sec:synthesis}

Now we investigate the performance of the proposed two-sample test, called the PW test, on high-dimensional synthesis data with $k=3$ across different dimensions $d$ and sample sizes $n=m\in\{75,100\}$.
The target distributions are specified following a similar setup in \cite{reddi2014decreasing}.
In the first case, the distributions $\mu$ and $\nu$ are both $d$-dimensional Gaussian distributions with different mean vectors and the same covariance matrix, where $d\in\{60,120\}$.
More specifically, $\mu=\mathcal{N}(0, I_d)$ and $\nu=\mathcal{N}(\xi, I_d)$, where $\xi=(1,0,\ldots,0)$ is a $d$-dimensional vector with the first entry being one.
In the second case, the distributions $\mu$ and $\nu$ are both $d$-dimensional Gaussian distributions with the same mean vector but different covariance metrics, where $d\in\{30,60\}$.
More specifically, $\mu=\mathcal{N}(0, I_d)$ and $\nu=\mathcal{N}(0,\Sigma)$ with $\Sigma=\mathrm{diag}(4,4,1,\ldots,1)$.
In other words, we only scale the first two diagonal entries in the covariance matrix of $\nu$ to make the hypothesis testing problem difficult to perform.
We compare the performance of the PW test with the MMD test discussed in~\cite{Gretton12}, where the kernel function is chosen to be the standard Gaussian kernel with bandwidth being the empirical median of data points.

The test power for different methods is presented in Fig.~\ref{fig:testing:AUC}, in which the results are averaged for $100$ independent trials.
The first two plots in Fig.~\ref{fig:testing:AUC} show receiver operating characteristic~(ROC) curves for mean-shifted Gaussian distributions, where Fig.~\ref{fig:testing:AUC}a) examines the test power for different choices of sample size $n$ with fixed dimension $d=120$, and Fig.~\ref{fig:testing:AUC}b) examines the power for different choices of $d$ with fixed $n=75$.
The last two plots correspond to covariance-shifted Gaussian distributions, where Fig.~\ref{fig:testing:AUC}c) examines the power for different $n$ with fixed $d=60$, and Fig.~\ref{fig:testing:AUC}d) examines the power for different $d$ with fixed $n=75$.
We can see that the power of all methods increases when the sample size increases, and the power of the PW test is greater than the MMD test especially in high dimensions.
The reason can be explained as follows.
As suggested in \cite{reddi2014decreasing}, the power of MMD test with the median heuristic decays quickly into zero when the dimension $d$ increases.
In contrast, the power of the PW test decreases slower since it operates by projecting high-dimensional data points into a low-dimensional subspace.
Therefore, we can assert that the PW test outperforms the MMD test especially in high dimensions.

\section{Conclusion}
We developed a projected Wasserstein distance for the problem of two-sample testing.
The finite-sample convergence of general IPMs between two empirical distributions was established.
Compared with the Wasserstein distance, the convergence rate of the projected Wasserstein distance has a minor dependence on the dimension of target distributions, which alleviates the curse of dimensionality.
A two-sample test is designed based on this theoretical result, and numerical experiments show that this test outperforms the existing benchmark.
In future work, we will study tighter performance guarantees for the projected Wasserstein distance and develop the optimal choice of $k$ to improve the performance of two-sample tests.


\bibliographystyle{IEEEtran}
\balance
\bibliography{short-bib.bib}

\ifnum\paperversion=2
\appendices
\clearpage
\onecolumn
\section{Proof of Technical Results}\label{Sec:appendix:proof}

\begin{lemma}[McDiarmid's Inequality~\cite{mcdiarmid_1989}]\label{Lemma:original:Mc}
Let $X_1,\ldots,X_n$ be independent random variables.
Assume that the function $f:~\prod_{i=1}^n\mathcal{X}_i\to\mathbb{R}$ satisfies the bounded difference property:
\[
\sup_{\substack{x_i\in\mathcal{X}_i, i=1,2,\ldots,n,\\\tilde{x}_i\in\mathcal{X}_i}} |f(x_1,\ldots,x_m) - f(x_1,\ldots,x_{i-1}, \tilde{x}_i, x_{i+1},\ldots,x_m)|\le c_i.
\]
Then we have
\[
\mathbb{P}\left(
|f - \mathbb{E}f|>t
\right)
\le
2\exp\left(
-\frac{2t^2}{\sum_{i=1}^nc_i^2}
\right).
\]
\end{lemma}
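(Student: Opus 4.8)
The plan is to establish the bound via the standard Doob martingale (the method of bounded differences) combined with the Azuma--Hoeffding machinery. First I would introduce the filtration $\mathcal{F}_i=\sigma(X_1,\ldots,X_i)$ for $i=0,1,\ldots,n$ (with $\mathcal{F}_0$ trivial) and define the Doob martingale
\[
Z_i=\mathbb{E}[f(X_1,\ldots,X_n)\mid\mathcal{F}_i],\qquad i=0,1,\ldots,n,
\]
so that $Z_0=\mathbb{E}f$ and $Z_n=f$. Writing $D_i=Z_i-Z_{i-1}$ for the martingale differences, telescoping gives $f-\mathbb{E}f=\sum_{i=1}^n D_i$, and by construction $\mathbb{E}[D_i\mid\mathcal{F}_{i-1}]=0$.

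The crucial step is to show that each $D_i$ is, conditionally on $\mathcal{F}_{i-1}$, confined to an interval of length at most $c_i$. Concretely, I would define
\[
L_i=\inf_{x}\,\mathbb{E}[f\mid\mathcal{F}_{i-1},X_i=x]-Z_{i-1},\qquad U_i=\sup_{x}\,\mathbb{E}[f\mid\mathcal{F}_{i-1},X_i=x]-Z_{i-1},
\]
so that $D_i\in[L_i,U_i]$ almost surely. Using the independence of $X_1,\ldots,X_n$ to integrate out $X_{i+1},\ldots,X_n$ inside the conditional expectations, and then invoking the bounded-difference hypothesis pointwise on the resulting expression, one obtains $U_i-L_i\le c_i$. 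This is the technical heart of the argument and the step I expect to be the main obstacle: one must verify carefully, using independence, that the fluctuation of $D_i$ as $X_i$ varies is controlled by $c_i$ uniformly over the conditioning values of $X_1,\ldots,X_{i-1}$, which is precisely how the bounded-difference property is converted into a martingale-difference width bound.

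With the conditional boundedness in hand, I would apply Hoeffding's lemma: for any random variable $Y$ with $\mathbb{E}[Y]=0$ taking values in an interval of width $w$, one has $\mathbb{E}[e^{sY}]\le\exp(s^2w^2/8)$. Applied conditionally to $D_i$, this yields $\mathbb{E}[e^{sD_i}\mid\mathcal{F}_{i-1}]\le\exp(s^2c_i^2/8)$. Iterating the tower property over $i=n,n-1,\ldots,1$ then produces the moment generating function bound
\[
\mathbb{E}\!\left[e^{s(f-\mathbb{E}f)}\right]\le\exp\!\left(\frac{s^2}{8}\sum_{i=1}^n c_i^2\right).
\]

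Finally, a Chernoff argument closes the proof. For $s>0$,
\[
\mathbb{P}(f-\mathbb{E}f>t)\le e^{-st}\,\mathbb{E}\!\left[e^{s(f-\mathbb{E}f)}\right]\le\exp\!\left(\frac{s^2}{8}\sum_{i=1}^n c_i^2-st\right),
\]
and optimizing over $s$ by taking $s=4t/\sum_{i=1}^n c_i^2$ gives the one-sided bound $\exp\!\left(-2t^2/\sum_{i=1}^n c_i^2\right)$. Running the identical argument with $-f$ in place of $f$ controls the lower tail, and a union bound combines the two tails into the stated factor-$2$ two-sided inequality.
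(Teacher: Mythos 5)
Your proof is correct: the Doob martingale $Z_i=\mathbb{E}[f\mid\mathcal{F}_i]$, the conditional width bound $U_i-L_i\le c_i$ (obtained by integrating out $X_{i+1},\ldots,X_n$ via independence and then applying the bounded-difference hypothesis pointwise), Hoeffding's lemma, and the Chernoff optimization at $s=4t/\sum_{i=1}^n c_i^2$ indeed deliver the one-sided bound $\exp\left(-2t^2/\sum_{i=1}^n c_i^2\right)$, with the two-sided factor $2$ following from the same argument applied to $-f$. Note that the paper does not prove this lemma at all --- it is imported from \citep{mcdiarmid_1989} as a black box for the proof of Proposition~\ref{Proposition:finite:IPM} --- and your argument is precisely the standard bounded-differences proof underlying that reference, so it is a valid (and the canonical) way to fill in the citation.
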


\begin{IEEEproof}[Proof of Proposition~\ref{Proposition:finite:IPM}]
\begin{itemize}
\item
Step 1: Simplify $|\text{IPM}({\mu},{\nu}) - \text{IPM}(\hat{{\mu}}_n,\hat{{\nu}}_m)|$:
\begin{align*}
&|\text{IPM}({\mu},{\nu}) - \text{IPM}(\hat{{\mu}}_n,\hat{{\nu}}_m)|\\
=&\left|
\sup_{f\in\mathcal{F}}
\big(
\mathbb{E}_{x\sim {\mu}}[f(x)]
-
\mathbb{E}_{y\sim {\nu}}[f(y)]
\big)
-
\sup_{f\in\mathcal{F}}
\left(
\frac{1}{n}\sum_{i=1}^nf(x_i)
-
\frac{1}{m}\sum_{i=1}^mf(y_i)
\right)
\right|\\
\le&
\underbrace{\sup_{f\in\mathcal{F}}
\left|
\mathbb{E}_{x\sim {\mu}}[f(x)]
-
\mathbb{E}_{y\sim {\nu}}[f(y)]
-
\frac{1}{n}\sum_{i=1}^nf(x_i)
+
\frac{1}{m}\sum_{i=1}^mf(y_i)
\right|}_{\Delta({\mu},{\nu},\{x_i\},\{y_i\})}
\end{align*}
\item
Step 2: Bound the concentration term (this follows the similar argument in \cite[A.2]{Gretton12}):
\begin{align*}
&\mathbb{E}_{x_i\sim {\mu}, y_i\sim{\nu}}[\Delta({\mu},{\nu},\{x_i\},\{y_i\})]\\
=&
\mathbb{E}_{x_i\sim {\mu}, y_i\sim{\nu}}\left[\sup_{f\in\mathcal{F}}
\left|
\mathbb{E}_{x\sim {\mu}}[f(x)]
-
\mathbb{E}_{y\sim {\nu}}[f(y)]
-
\frac{1}{n}\sum_{i=1}^nf(x_i)
+
\frac{1}{m}\sum_{i=1}^mf(y_i)
\right|\right]\\
=&
\mathbb{E}_{x_i\sim {\mu}, y_i\sim{\nu}}\left[\sup_{f\in\mathcal{F}}
\left|
\mathbb{E}_{x_i'\sim {\mu}}
\left(
\frac{1}{n}\sum_{i=1}^nf(x_i')
\right)
-
\mathbb{E}_{y_i'\sim {\nu}}
\left(
\frac{1}{m}\sum_{i=1}^mf(y_i')
\right)
-
\frac{1}{n}\sum_{i=1}^nf(x_i)
+
\frac{1}{m}\sum_{i=1}^mf(y_i)
\right|\right]\\
\le&
\mathbb{E}_{\substack{x_i\sim {\mu}, y_i\sim{\nu} \\ x_i'\sim\mu, y_i'\sim\nu}}
\left[
\sup_{f\in\mathcal{F}}
\left|
\frac{1}{n}\sum_{i=1}^nf(x_i')
-
\frac{1}{n}\sum_{i=1}^nf(x_i)
-
\frac{1}{m}\sum_{i=1}^mf(y_i')
+
\frac{1}{m}\sum_{i=1}^mf(y_i)
\right|
\right]\\
=&
\mathbb{E}_{\substack{x_i\sim {\mu}, y_i\sim{\nu} \\ x_i'\sim\mu, y_i'\sim\nu\\ \sigma, \sigma'}}
\left[
\sup_{f\in\mathcal{F}}
\left|
\frac{1}{n}\sum_{i=1}^n
\sigma_i
\left(
f(x_i') - f(x_i)
\right)
+
\frac{1}{m}\sum_{i=1}^m
\sigma_i'
\left(
f(y_i') - f(y_i)
\right)
\right|
\right]\\
\le&
\mathbb{E}_{x_i\sim \mu, x_i'\sim {\mu},\sigma}
\left[
\sup_{f\in\mathcal{F}}
\left|
\frac{1}{n}\sum_{i=1}^n
\sigma_i
\left(
f(x_i') - f(x_i)
\right)
\right|
\right]
+
\mathbb{E}_{y_i\sim \nu, y_i'\sim {\nu},\sigma'}
\left[
\sup_{f\in\mathcal{F}}
\left|
\frac{1}{m}\sum_{i=1}^m
\sigma_i'
\left(
f(y_i') - f(y_i)
\right)
\right|
\right]\\
\le&
2[
\mathfrak{R}_n(\mathcal{F}, {\mu})
+
\mathfrak{R}_m(\mathcal{F}, {\nu})
]
\end{align*}
It follows that
\begin{align*}
&\mathbb{P}\bigg\{
\big|\text{IPM}({\mu},{\nu}) - \text{IPM}(\hat{{\mu}}_n,\hat{{\nu}}_m)\big|
>
\epsilon
+2[
\mathfrak{R}_n(\mathcal{F}, {\mu})
+
\mathfrak{R}_m(\mathcal{F}, {\nu})
]
\bigg\}\\
\le&
\mathbb{P}\bigg\{
\big|\Delta({\mu},{\nu},\{x_i\},\{y_i\})
-
\mathbb{E}_{x_i\sim {\mu}, y_i\sim{\nu}}[\Delta({\mu},{\nu},\{x_i\},\{y_i\})]
\big|
>
\epsilon
\big]
\bigg\}
\end{align*}
\item
Step 3: Apply the concentration inequality:
consider the function 
\[
(x_1,\ldots,x_n,y_1,\ldots,y_m)\mapsto \Delta({\mu},{\nu},\{x_i\}_{i=1}^n,\{y_j\}_{j=1}^m).
\]
We can see that 
\begin{align*}
&\left|
\Delta(x_1,\ldots,x_n,y_1,\ldots,y_m) - \Delta(x_1,\ldots,x_{i-1},\tilde{x}_i,x_{i+1},x_n,y_1,\ldots,y_m)
\right|\\
\le&\frac{1}{n}\sup_{f\in\mathcal{F}}
|f(x_i) - f(\tilde{x_i})|
\\\le&\frac{L}{n}d(x_i,\tilde{x}_i)\le \frac{B_{\mu}L}{n}.
\end{align*}
Similarly, 
\[
\left|
\Delta(x_1,\ldots,x_n,y_1,\ldots,y_m) - \Delta(x_1,\ldots,x_n,y_1,\ldots,y_{i-1},\tilde{y}_i,y_{i+1},\ldots,y_m)
\right|
\le \frac{B_{\nu}L}{m}.
\]
Hence, applying Lemma~\ref{Lemma:original:Mc} with $c_i=\frac{B_\mu L}{n}, i=1,\ldots,n$ and $c_i=\frac{B_\nu L}{m}, i=n+1,\ldots,n+m$ gives the desired result:
\begin{align*}
&\mathbb{P}\bigg\{
\big|\Delta({\mu},{\nu},\{x_i\},\{y_i\})
-
\mathbb{E}_{x_i\sim {\mu}, y_i\sim{\nu}}[\Delta({\mu},{\nu},\{x_i\},\{y_i\})]
\big|
>
\epsilon
\big]
\bigg\}\\
\le&
2\exp\left(
-\frac{2mn\epsilon^2}{L^2(mB_{\mu}^2 + nB_{\nu}^2)}
\right).
\end{align*}

\end{itemize}
\end{IEEEproof}

\begin{IEEEproof}[Proof of Proposition~\ref{Proposition:special:P1}]
In this case, we have
\[
\big|\text{IPM}({\mu},{\nu}) - \text{IPM}(\hat{{\mu}}_n,\hat{{\nu}}_m)\big|
\le
\underbrace{\sup_{f\in\mathcal{F}}
\left|
\frac{1}{n}\sum_{i=1}^nf(x_i)
-
\frac{1}{n}\sum_{i=1}^nf(y_i)
\right|}_{\Delta(\{x_i\},\{y_i\})},
\]
where 
\begin{align*}
\mathbb{E}_{x_i\sim\mu, y_i\sim\mu}
[\Delta(\{x_i\},\{y_i\})]
&=
\mathbb{E}_{x_i\sim\mu, y_i\sim\mu}
\left[
\sup_{f\in\mathcal{F}}
\left|
\frac{1}{n}\sum_{i=1}^nf(x_i) - \frac{1}{n}\sum_{i=1}^nf(y_i)
\right|
\right]\\
&\le \mathbb{E}_{x_i\sim\mu, y_i\sim\mu, \sigma}
\left[
\sup_{f\in\mathcal{F}}
\left|
\frac{1}{n}\sum_{i=1}^n\sigma_i(f(x_i) - f(y_i))
\right|
\right]
\le2\mathfrak{R}_n(\mathcal{F}, {\mu}).
\end{align*}
We can see that 
\begin{align*}
\left|
\Delta(x_1,\ldots,x_n,y_1,\ldots,y_m) - \Delta(x_1,\ldots,x_{i-1},\tilde{x}_i,x_{i+1},x_n,y_1,\ldots,y_m)
\right|&\le \frac{B_{\mu}L}{n},\\
\left|
\Delta(x_1,\ldots,x_n,y_1,\ldots,y_m) - \Delta(x_1,\ldots,x_n,y_1,\ldots,y_{i-1},\tilde{y}_i,y_{i+1},\ldots,y_m)
\right|&\le \frac{B_{\mu}L}{n}.
\end{align*}

Hence, applying Lemma~\ref{Lemma:original:Mc} with $c_i=\frac{B_{\mu}L}{n}, i=1,\ldots,n+m$ gives the desired result.
\end{IEEEproof}

\begin{lemma}\label{Lemma:bias:Rademacher}
For fixed $g\in\Lip_1$, we have that 
\[
\mathbb{E}_{x_i\sim{\mu}, \sigma_i}
\left[ 
\sup_{A: A\trans A=I} \frac{1}{n}\sum_{i=1}^n\sigma_ig(A\trans x_i)
\right]
\le 
\frac{\sqrt{2k}}{\sqrt{n}}
\mathbb{E}_{x\sim{\mu}}\sqrt{\|x\|^2}
\]
\end{lemma}
\begin{IEEEproof}[Proof of Lemma~\ref{Lemma:bias:Rademacher}]
Denote by $a_j$ the $j$-th column of the matrix $A$. 
Then we have
\begin{subequations}
\begin{align}
&\mathbb{E}_{x_i\sim{\mu}, \sigma_i}
\left[ 
\sup_{A: A\trans A=I} \frac{1}{n}\sum_{i=1}^n\sigma_ig(A\trans x_i)
\right]\nonumber\\
\le &
\frac{\sqrt{2}}{n}
\mathbb{E}_{x_i\sim{\mu}, \sigma_i}\left[
\sup_{A: A\trans A=I}\sum_{i=1}^n\sum_{j=1}^k \sigma_{i,j}a_j\trans x_i
\right]\label{Eq:proof:rade:a}\\
\le &
\frac{\sqrt{2}}{n}
\mathbb{E}_{x_i\sim{\mu}, \sigma_i}\left[
\sup_{A: A\trans A=I}\sum_{j=1}^k\|a_j\|_2
\left\|
\sum_{i=1}^n\sigma_{i,j}x_i
\right\|_2
\right]\label{Eq:proof:rade:b}\\
\le &
\frac{\sqrt{2}}{n}
\mathbb{E}_{x_i\sim{\mu}, \sigma_i}\sqrt{
\sum_{j=1}^k
\left\|
\sum_{i=1}^n\sigma_{i,j}x_i
\right\|_2^2
}\nonumber\\
\le &\frac{\sqrt{2}}{n}
\mathbb{E}_{x_i\sim{\mu}}\sqrt{
k
\sum_{i=1}^n\left\|
x_i
\right\|_2^2
}\nonumber\\
= &\frac{\sqrt{2k}}{\sqrt{n}}
\mathbb{E}_{x\sim{\mu}}\sqrt{\|x\|^2}\nonumber
\end{align}
\end{subequations}
where \eqref{Eq:proof:rade:a} is by applying Corollary~4 in \cite{maurer2016vectorcontraction},
and \eqref{Eq:proof:rade:b} is by applying the Cauchy-Schwartz inequality.
\end{IEEEproof}

\begin{lemma}\label{Lemma:Rademacher:variation}
Denote by $D_{\mu} = \max_{x}\|x\|$. Then we have that
\begin{align*}
&\mathbb{E}_{x_i\sim{\mu}, \sigma_i}
\left[ 
\sup_{g\in\Lip_1}
\left(
\sup_{A: A\trans A=I} \frac{1}{n}\sum_{i=1}^n\sigma_ig(A\trans x_i)
-
\mathbb{E}_{x_i\sim{\mu}, \sigma_i}
\sup_{A: A\trans A=I} \frac{1}{n}\sum_{i=1}^n\sigma_ig(A\trans x_i)
\right)
\right]\\
&\qquad\qquad \le\inf_{\epsilon>0}\left\{
2\epsilon + \sqrt{\frac{36}{n}(B_{\mu}^2+4D_{\mu}^2)}\cdot \sqrt{\log\left( 
2\left\lceil 
\frac{2}{\epsilon}
\right\rceil+1
\right) + (1+2/\epsilon)^k\log 2}
\right\}.
\end{align*}
\end{lemma}
\begin{IEEEproof}[Proof of Lemma~\ref{Lemma:Rademacher:variation}]
Define the empirical process
\[
X_{g} = \sup_{A: A\trans A=I} \frac{1}{n}\sum_{i=1}^n\sigma_ig(A\trans x_i)
-
\mathbb{E}_{x_i\sim{\mu}, \sigma_i}
\sup_{A: A\trans A=I} \frac{1}{n}\sum_{i=1}^n\sigma_ig(A\trans x_i).
\]
It is easy to see that $\mathbb{E}[X_{g}]=0$. 
We first claim that $X_g$ is a sub-Gaussian variable for fixed $g\in\Lip_1$.
Consider $Z=\{x_i\}_{i=1}^n$ and $\kappa=\{\sigma_i\}_{i=1}^n$ and define $f(Z,\kappa)=\sup_{A: A\trans A=I} \frac{1}{n}\sum_{i=1}^n\sigma_ig(A\trans x_i)$. 
Then
\begin{align*}
|f(Z,\kappa) - f(Z_{(i)}',\kappa)&\le 
\left|
\frac{1}{n}\sup_{A: A\trans A=I}\sigma_i[g(A\trans x_i) - g(A\trans x_i')]
\right|
\le \frac{1}{n}\sup_{A: A\trans A=I}\|A\trans(x_i-x_i')\|\le \frac{B_{\mu}}{n}\\
|f(Z,\kappa) - f(Z,\kappa_{(i)}')&\le 
\left|
\frac{1}{n}\sup_{A: A\trans A=I}(\sigma_i - \sigma_i')g(A\trans x_i)
\right|
\le 
\frac{2}{n}\sup_{A:~A\trans A=I}\|A\trans x_i\|\le \frac{2}{n}D_{\mu},
\end{align*}
with $D_{\mu}=\max_{x}\|x\|$.
Therefore, applying McDiarmid's inequality in Lemma~\ref{Lemma:original:Mc} implies that
\begin{align*}
\text{Pr}
\left\{
|X_g|\ge u
\right\}
=
\text{Pr}
\left\{
|f(Z,\kappa) - \mathbb{E}[f(Z,\kappa)]|\ge u
\right\}
\le 2\exp\left(
-\frac{2nt^2}{B_{\mu}^2 + 4D_{\mu}^2}
\right).
\end{align*}
which means that for fixed $g$, the random variable $X_g$ is sub-Gaussian~\cite[Lemma 2.3.2]{Gin15} with the parameter $\frac{18}{n}(B_{\mu}^2+4D_{\mu}^2)$.
Define the metric $\textsf{d}(g,g')=\sup_{x:~\|x\|_2\le D_{\mu}}|g(x) - g'(x)|$.
For any $g,g'\in\Lip_1$, we have that
\begin{align*}
    |X_g-X_{g'}|&\le \left|\sup_{A: A\trans A=I}\bigg(
    \frac{1}{n}\sum_{i=1}^n\sigma_i[g(A\trans x_i)-g'(A\trans x_i)]
    \bigg)\right|\\
    &\qquad + \left|\mathbb{E}_{x_i\sim\mu, \sigma_i}\left[ 
    \sup_{A: A\trans A=I}\bigg(
    \frac{1}{n}\sum_{i=1}^n\sigma_i[g(A\trans x_i)-g'(A\trans x_i)]
    \bigg)
    \right]\right|\\
    &\le 2\sup_{A: A\trans A=I}\sup_{x}|g(A\trans x) - g'(A\trans x)|\le 2\sup_{y:~\|y\|\le D_{\mu}}|g(y) - g'(y)|\\
    &=2\textsf{d}(g,g').
\end{align*}
Thus, by the standard $\epsilon$-net argument, we have that 
\[
\mathbb{E}_{x_i\sim{\mu}, \sigma_i}\left[ 
\sup_{g\in Lip_1}
X_g
\right]
\le \inf_{\epsilon>0}
\left\{
2\epsilon + \sqrt{\frac{18}{n}(B_{\mu}^2+4D_{\mu}^2)}\cdot \sqrt{2\log\mathcal{N}(\Lip_1, \epsilon, \textsf{d})}
\right\}
\]
Then the covering number 
\[
\mathcal{N}(\epsilon, \Lip_1, \textsf{d}) = \left( 
2\left\lceil 
\frac{2B_{\mu}}{\epsilon}
\right\rceil+1
\right)\cdot 2^{(1+2B_{\mu}/\epsilon)^k}
\]
which implies that
\[
\mathbb{E}_{x_i\sim{\mu}, \sigma_i}\left[ 
\sup_{g\in Lip_1}
X_g
\right]
\le \inf_{\epsilon>0}
\left\{
2\epsilon + \sqrt{\frac{36}{n}(B_{\mu}^2+4D_{\mu}^2)}\cdot \sqrt{\log\left( 
2\left\lceil 
\frac{2B_{\mu}}{\epsilon}
\right\rceil+1
\right) + (1+2B_{\mu}/\epsilon)^k\log 2}
\right\}.
\]

\end{IEEEproof}

\begin{IEEEproof}[Proof of Proposition~\ref{Proposition:Rademacher}]
The Rademacher complexity $\mathfrak{R}_n(\mathcal{F}, {\mu})$ admits the following bias-variation decomposition:
\begin{align*}
\mathfrak{R}_n(\mathcal{F}, {\mu})
&\le 
\sup_{g\in\Lip_1} \mathbb{E}_{x_i\sim{\mu}, \sigma_i}
\left[ 
\sup_{A: A\trans A=I} \frac{1}{n}\sum_{i=1}^n\sigma_ig(A\trans x_i)
\right]
\\&+
\mathbb{E}_{x_i\sim{\mu}, \sigma_i}
\left[ 
\sup_{g\in\Lip_1}
\left(
\sup_{A: A\trans A=I} \frac{1}{n}\sum_{i=1}^n\sigma_ig(A\trans x_i)
-
\mathbb{E}_{x_i\sim{\mu}, \sigma_i}
\sup_{A: A\trans A=I} \frac{1}{n}\sum_{i=1}^n\sigma_ig(A\trans x_i)
\right)
\right]
\end{align*}
Applying the result in Lemma~\ref{Lemma:bias:Rademacher} and Lemma~\ref{Lemma:Rademacher:variation} upper bounds the bias and variation term, respectively.

\end{IEEEproof}

\section{Details for Computing the Projected Wasserstein Distance}\label{Appendix:sec:computation}
Denote the space of orthogonal projectors as $\mathcal{M}=\{A\in\mathbb{R}^{d\times k}:~A\trans A=I_k\}$.
For fixed $A\in\mathcal{M}$, define the mapping
\begin{equation}\label{Eq:f:A}
    \begin{aligned}
f(A)&=\min_{\pi\in\Gamma(\hat{\mu}_n,\hat{\nu}_m)}~\int \textsf{d}(A\trans x,A\trans y)\diff\pi(x,y)\\
&=\min_{\pi\in\mathbb{R}^{n\times m}_+}~\left\{\sum_{(i,j)}\pi_{i,j}\|A\trans(x_i-y_j)\|_2:\quad \sum_i\pi_{i,j}=\frac{1}{m}, \sum_{j}\pi_{i,j}=\frac{1}{n}\right\}.
\end{aligned}
\end{equation}
Based on Definition~\ref{Definition:projected:wasserstein:distance}, the computation of the projected Wasserstein distance $\mathcal{P}W(\hat{\mu}_n, \hat{\nu}_m)$ can be formulated as the manifold optimization problem
\begin{equation}
    \max_{A\in\mathcal{M}}~f(A).\label{Eq:manifold:optimization}
\end{equation}
Here we use the Riemannian gradient method to obtain a near-optimal solution to the problem defined above.
Suppose that $\pi^*$ is an optimal solution to \eqref{Eq:f:A}.
Based on the Danskin's theorem, the sub-differential of the objective function is given by
\begin{align*}
\nabla f(A)&=\sum_{i,j}\pi^*_{i,j}\nabla~\|A\trans(x_i-y_j)\|_2.
\end{align*}
Since the $\ell_2$ norm is a non-smooth function, we approximate it with a smooth function $\|x\|_{2,\kappa}=\sqrt{x^2+\kappa^2}-\kappa$. 
Then an approximated sub-differential can be computed as 
\begin{equation*}
\begin{aligned}
\widehat{\nabla} f(A)&=\sum_{i,j}\pi^*_{i,j}\nabla~\|A\trans(x_i-y_j)\|_{2,\kappa}\\
&=\left[ \sum_{i,j}\frac{\pi^*_{i,j}}{\|A\trans(x_i-y_j)\|_{2,\kappa}}
(x_i-y_j)(x_i-y_j)\trans\right]A.
\end{aligned}
\end{equation*}
Moreover, the computation of this approximated sub-differential can be arranged by avoiding double-loop summation over indexes $i$ and $j$.
Then we can obtain the approximated Riemannian sub-differential of $f$ by the operation
\[
\text{Grad}[f](A) = \mathcal{P}_{T_A}\big(\nabla f(A)\big),
\]
where $\mathcal{P}_{T_A}$ denotes the projection operator into the tangent space at $A$, which is specified as
\[
\mathcal{P}_{T_A}(G) = G - A\frac{G\trans A + A\trans G}{2}.
\]
A generic Riemannian gradient method solves \eqref{Eq:manifold:optimization} based on the iteration
\[
A_{t+1} \leftarrow \text{Retr}_{A_t}(\gamma_{t+1}\zeta_{t+1}),
\]
where $\zeta_{t+1}$ is a Riemannian sub-differential of $f$ at $A_t$, $\text{Retr}$ is any retraction on Stiefel manifold, and $\gamma_{t+1}=\gamma_0/\sqrt{t+1}$ stands for the diminishing step size.
We summarize the pseudocode of the Riemannian gradient method in Algorithm~\ref{alg:1}.

\begin{algorithm}[ht]
\caption{Riemannian Gradient Method for Solving \eqref{Eq:manifold:optimization}}
\label{alg:1} 
\begin{algorithmic}[1] 
\REQUIRE
{
Data points $\{x_i\}_{i=1}^n$ and $\{y_i\}_{i=1}^m$,
step sizes $\{\gamma_t\}_{t=0}^{T}$,
tolerance $\epsilon$
}
\STATE{Initialize $A_0\in\mathcal{M}$}
\FOR{$t=0,1,\ldots,T-1$}
\STATE{
Compute the optimal transport solution between two set of points $\{A\trans x_i\}_{i=1}^n$ and $\{A\trans y_i\}_{i=1}^m$, denoted as $\pi_{t+1}$.
}
\STATE{
Compute $\zeta_{t+1}\leftarrow \text{Grad}[f](A_t)$.
}
\STATE{
Compute $A_{t+1}\leftarrow \text{Retr}_{A_t}(\gamma_{t+1}\zeta_{t+1})$.
}
\STATE{
Stop the iteration when $\|A_{t+1}-A_t\|_F/(1 + \|A_t\|_F)\le\epsilon$.
}
\ENDFOR
\end{algorithmic}
\end{algorithm}
\fi

\end{document}